\newtheorem{thm}{Theorem}[section]
\newtheorem{prop}[thm]{Proposition}
\newtheorem{asum}[thm]{Assumption}
\begin{document}
\title{Hybrid Federated Learning \\for Noise-Robust Training}

\author{Yongjun~Kim,~\IEEEmembership{Graduate Student Member,~IEEE,}
        Hyeongjun~Park,~\IEEEmembership{Graduate Student Member, ~IEEE}
        Hwanjin~Kim,~\IEEEmembership{Member, IEEE}
        and~Junil~Choi,~\IEEEmembership{Senior Member, IEEE}
\thanks{
Y. Kim, H. Park, and J. Choi are with the School of Electrical Engineering, KAIST, Daejeon 34141, Korea (email: {yongjunkim, mika0303, junil}@kaist.ac.kr).}
\thanks{H. Kim is with the School of Electronics Engineering, Kyungpook National University, Daegu 41566, Korea (email: hwanjin@knu.ac.kr)}
}


\maketitle

\begin{abstract}
Federated learning (FL) and federated distillation (FD) are distributed learning paradigms that train UE models with enhanced privacy, each offering different trade-offs between noise robustness and learning speed. To mitigate their respective weaknesses, we propose a hybrid federated learning (HFL) framework in which each user equipment (UE) transmits either gradients or logits, and the base station (BS) selects the per-round weights of FL and FD updates. We derive convergence of HFL framework and introduce two methods to exploit degrees of freedom (DoF) in HFL, which are (i) adaptive UE clustering via Jenks optimization and (ii) adaptive weight selection via a damped Newton method. Numerical results show that HFL achieves superior test accuracy at low SNR when both DoF are exploited.
\end{abstract}

\begin{IEEEkeywords}
Federated learning (FL), Federated distillation (FD), Hybrid federated learning (HFL), Noise-robust training
\end{IEEEkeywords}

%
\IEEEpeerreviewmaketitle

\section{Introduction}
\label{Sec:Introduction}

Next-generation wireless networks are expected to process data from a massive number of devices with low latency. However, centralized learning is often impractical due to strict privacy regulations and limited backhaul capacity \cite{Li:2020}. Federated learning (FL) \cite{McMahan:2017} addresses these challenges by keeping training data local and exchanging only model parameters or gradients. This decentralized paradigm offers three benefits: (i) enhanced privacy by avoiding raw data exposure, (ii) reduced communication overhead through lightweight transmissions, and (iii) lower latency for time-critical 5G/6G applications such as autonomous driving \cite{Lim:2020}. 

Despite these advantages, FL remains challenging over wireless links. High-dimensional gradients create substantial uplink and downlink traffic and impose computational and energy burdens on low-power edge devices. To alleviate this, federated distillation (FD) \cite{Jeong:2018, Li:2024} exchanges logits instead of parameters, yielding smaller messages, lower communication and energy costs, stronger privacy in stricter regulatory settings, and support for distributed learning without requiring labels at each client. 

While FD has emerged as a promising approach for intelligent edge services, it sacrifices peak accuracy in exchange for lower communication overhead and improved robustness to uplink noise \cite{Mu:2024, Gad:2024}.
Under noiseless uplink conditions, exchanging only low-dimensional logits generally yields lower accuracy than FL \cite{Lin:2020}, because when uplink noise corrupts transmissions, FL suffers significant degradation because high-dimensional gradients are directly distorted. FD, by transmitting lower-dimensional logits that are less sensitive to noise, is more robust for power-constrained edge devices \cite{Ahn:2019}.

To address the respective limitations of FL and FD, we propose a hybrid federated learning (HFL) framework in which each user equipment (UE) adaptively transmits either gradient or logit in each communication round, retaining FL's accuracy under favorable channel conditions while leveraging FD's noise robustness under adverse conditions. The main contributions of this work are summarized as follows:
\begin{enumerate}
    \item We design a novel HFL framework that jointly leverages gradients and logits transmissions, and we provide a convergence analysis.
    \item We introduce methods that optimize transmission degrees of freedom (DoF) via adaptive UE clustering and weight selection.
\end{enumerate}
The remainder of the paper is organized as follows. Sec.~\ref{Sec:System model} presents the system model, including signal transmission and reception. Sec.~\ref{sec:hfl} details the HFL scheme, its convergence analysis, and methods for adaptive UE clustering and weight selection. Sec.~\ref{Sec:Simulation results} provides simulation results validating the noise-robust test accuracy of HFL at low signal-to-noise ratio (SNR). Finally, Sec.~\ref{Sec:Conclusion} concludes the paper.

\section{System Model}
\label{Sec:System model}

We consider a cooperative learning system in which a base station (BS) with $N$ antennas serves a set of $K$ single-antenna UEs $\mathcal{K}$, i.e., $|\mathcal{K}| = K$. Each UE $k \in \mathcal{K}$ conveys a local training dataset $\mathcal{D}_{k}$. At communication round $t$, the BS sends an indicator $\mathcal{I}_k^{(t)} \in \{0,1\}$ specifying the uplink transmission data type for UE $k$. Define the FL set $\mathcal{K}_1 = \{k_1: k_1 \in \mathcal{K},  \mathcal{I}_{k_1}^{(t)} = 0\}$ with $|\mathcal{K}_1| = K_1$ and the FD set $\mathcal{K}_2 = \{k_2: k_2 \in \mathcal{K},  \mathcal{I}_{k_2}^{(t)} = 1\}$ with $|\mathcal{K}_2| = K - K_1$. UE $k_1$ in $\mathcal{K}_1$ transmits gradients $\mathbf{g}_{k_1}^{(t)} \in \mathbb{R}^{P}$, where $P$ is the length of the vectorized gradient. UE $k_2$ in $\mathcal{K}_2$ computes logits on a public dataset $\mathcal{D}_{\text{pub}}$ containing $P_{\text{pub}}$ inputs, yielding $\mathbf{z}_{k_2}^{(t)} \in \mathbb{R}^{C P_{\text{pub}}}$ for $C$ classes in the classification task.

For wireless transmission, both uplink transmission data types are mapped to the complex signal through the following processing. Let $\mathbf{u}_k^{(t)}$ denote either $\mathbf{g}_{k_1}^{(t)}$ or $\mathbf{z}_{k_2}^{(t)}$. First, form a complex-valued vector by pairing real entries as $\tilde{\mathbf{u}}_k^{(t)}[m] = \mathbf{u}_k^{(t)}[2m - 1] + j\, \mathbf{u}_k^{(t)}[2m]$ for $1 \le m \le \lceil |\mathbf{u}_k^{(t)}| / 2 \rceil$, where $\lceil \cdot \rceil$ is the ceiling function and $[m]$ denotes the $m$-th element. To ensure consistent scaling, standardization is applied as $\tilde{\tilde{\mathbf{u}}}_k[m] = (\tilde{\mathbf{u}}_k[m] - \mu_{\tilde{\mathbf{u}}_k})/\sigma_{\tilde{\mathbf{u}}_k}$, where $\mu_{\tilde{\mathbf{u}}_k}$ and $\sigma_{\tilde{\mathbf{u}}_k}$ are the mean and standard deviation of $\tilde{\mathbf{u}}_k$, respectively, followed by normalization $\tilde{\tilde{\tilde{\mathbf{u}}}}_k[m] = \tilde{\tilde{\mathbf{u}}}_k[m]/\lVert \tilde{\tilde{\mathbf{u}}}_k \rVert_\infty$. We set $L =\max_k |\tilde{\tilde{\tilde{\mathbf{u}}}}_k|$ as the number of time slots per round. If the length of $\tilde{\tilde{\tilde{\mathbf{u}}}}_k$ is shorter than $L$, zero-padding is applied to unify the signal length, resulting in the final transmit signal of UE $k$ $\mathbf{x}_{k}^{(t)} \in \mathbb{C}^{1 \times L}$. Stacking all UEs’ signals yields $\mathbf{X}^{(t)} \in \mathbb{C}^{K \times L}$, which is transmitted over the wireless channel. To focus on the effect of noise in $\mathbf{X}^{(t)}$, we assume error-free transmission of $\mu_{\tilde{\mathbf{u}}_k}$, $\sigma_{\tilde{\mathbf{u}}_k}$, $\lVert \tilde{\tilde{\mathbf{u}}}_k \rVert_{\infty}$, and all downlink signals.

When transmitting $\mathbf{X}^{(t)}$, a Rayleigh fading channel $\mathbf{H}^{(t)} = [ \mathbf{h}_1^{(t)}, \mathbf{h}_2^{(t)}, \ldots, \mathbf{h}_K^{(t)} ] \in \mathbb{C}^{N \times K}$ is considered. 
We assume a constant channel within each communication round. Each communication round $t$ consists of $L$ synchronized time slots indexed by $m = 1,2,\cdots,L$, during which each column of $\mathbf{X}^{(t)}$ is transmitted.
The received signal at the BS in the $m$-th time slot is
\begin{align}
\mathbf{y}^{(t)}[m] = \sqrt{\rho}\, \mathbf{H}^{(t)} \mathbf{x}^{(t)}[m] + \mathbf{n}^{(t)}[m],  
\label{eq:system model}
\end{align}
where $\mathbf{x}^{(t)}[m] \in \mathbb{C}^{K\times 1}$ is the $m$-th column vector of $\mathbf{X}^{(t)}$, $\mathbf{n}^{(t)} \sim \mathcal{CN}(\mathbf{0}, \mathbf{I}_N)$ is additive white complex Gaussian noise, $\mathbf{I}_N$ is the $N \times N$ identity matrix, and $\rho$ is the SNR. For $N \ge K$, the BS applies zero-forcing (ZF) decoding to $\mathbf{y}^{(t)}[m]$, and the detected signal $\hat{\mathbf{x}}^{(t)}[m] \in \mathbb{C}^{K\times 1}$ is expressed as
\begin{align}
    \frac{(\mathbf{H}^{(t)\mathrm{H}} \mathbf{H}^{(t)})^{-1} \mathbf{H}^{(t)\mathrm{H}}}{\sqrt{\rho}}\, \mathbf{y}^{(t)}[m] = \mathbf{x}^{(t)}[m] + \tilde{\mathbf{n}}^{(t)}[m],
    \label{eq:decoding}
\end{align}
where $\tilde{\mathbf{n}}^{(t)}[m] \sim \mathcal{CN}\left(\mathbf{0}, (\rho\, \mathbf{H}^{(t)\mathrm{H}} \mathbf{H}^{(t)})^{-1}\right)$. After detection for all $m \in [1,L]$, $\hat{\mathbf{x}}_k^{(t)} \in \mathbb{C}^{1\times L}$ is mapped back to $\hat{\mathbf{g}}_{k_1}^{(t)}$ or $\hat{\mathbf{z}}_{k_2}^{(t)}$ for each $k_1 \in \mathcal{K}_1$ and $k_2 \in \mathcal{K}_2$, corresponding to the recovered gradient and logit, respectively.

\begin{figure}[t]
\centering
\includegraphics[width=\columnwidth]{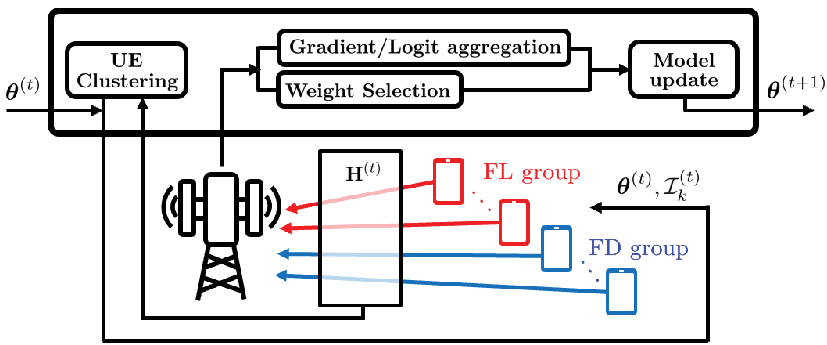}
\caption{HFL framework at round $t$.}
\label{fig:system_model}
\vspace{-1.5em}
\end{figure}

\section{Hybrid Federated Learning}
\label{sec:hfl}
We propose an HFL framework illustrated in Fig.~\ref{fig:system_model} that allows each UE to transmit its gradient or logit based on a channel condition. We first describe the single-round operation and derive convergence, showing that the resulting bound generalizes those of FL and FD. We then introduce dynamic UE clustering and weight selection methods to provide additional DoF. 

\subsection{Hybrid Federated Learning Framework}
At the beginning of communication round $t$, the global model parameter $\boldsymbol{\theta}^{(t)}$ is broadcast to all UEs so that $\boldsymbol{\theta}_k^{(t)} = \boldsymbol{\theta}^{(t)}$. Recall that each UE $k$ belongs to the FL group if $I_k^{(t)} = 0$, and to the FD group if $I_k^{(t)} = 1$. On the UE side, each FL group UE $k_1$ trains locally and obtains the gradient $\mathbf{g}_{k_1}^{(t)} = \nabla F_{k_1} (D_{k_1}^{(t)};\boldsymbol{\theta}_{k_1}^{(t)})$, where $F_{k_1}$ is the cross-entropy (CE) loss and $D_{k_1}^{(t)}$ is UE $k_1$'s local dataset sampled at round $t$ for stochastic gradient descent (SGD). 
Similarly, FD group UE $k_2$ first performs local SGD using its private dataset $D_{k_2}^{(t)}$ to update $\boldsymbol{\theta}_{k_2}^{(t)}$, and then generates logits as $\mathbf{z}_{k_2}^{(t)} = f_{k_2} (D_{\text{pub}}^{(t)};\boldsymbol{\theta}_{k_2}^{(t)})$, where $f_{k_2}$ is the neural network (NN) of UE $k_2$ parameterized by $\boldsymbol{\theta}_{k_2}^{(t)}$ and $D_{\text{pub}}^{(t)}$ is the public dataset shared across all UEs and the BS, sampled at round $t$ for SGD. FL and FD group UEs convert $\mathbf{g}_{k_1}^{(t)}$ and $\mathbf{z}_{k_2}^{(t)}$ into the transmitted signal $\mathbf{X}^{(t)}$, send them via the uplink, and reconstruct $\hat{\mathbf{g}}_{k_1}^{(t)}$ and $\hat{\mathbf{z}}_{k_2}^{(t)}$ as described in Sec.~\ref{Sec:System model}. Recall that $\hat{\mathbf{g}}_{k_1}^{(t)} = \mathbf{g}_{k_1}^{(t)} + \mathbf{e}_{k_1}^{(t)}$ and $\hat{\mathbf{z}}_{k_2}^{(t)} = \mathbf{z}_{k_2}^{(t)} + \mathbf{e}_{k_2}^{(t)}$, where $\mathbf{e}_{k_1}^{(t)}$ and $\mathbf{e}_{k_2}^{(t)}$ are effective noise terms decided by SNR, channel, and preprocessing, applied to each UE's gradient and logit.

Then, at the BS, reconstructed gradients and logits are aggregated as
\begin{align}
\hat{\mathbf{g}}^{(t)} &=  \sum_{k_1 = 1}^{K_1} \frac{|D_{k_1}^{(t)}|}{\sum_{\ell = 1}^{K_1} |D_\ell^{(t)}|} \hat{\mathbf{g}}_{k_1}^{(t)}, \label{eq: weight aggregation}\\
\hat{\mathbf{z}}^{(t)} &= \sum_{k_2 = K_1 + 1}^{K} \frac{|D_{k_2}^{(t)}|}{\sum_{\ell = K_1 + 1}^{K} |D_\ell^{(t)}|} \hat{\mathbf{z}}_{k_2}^{(t)},    
\label{eq: logit aggregation}
\end{align}
which can be equivalently written as $\hat{\mathbf{g}}^{(t)} = \mathbf{g}^{(t)} + \mathbf{e}_{g}^{(t)}$ and $\hat{\mathbf{z}}^{(t)} = \mathbf{z}^{(t)} + \mathbf{e}_{z}^{(t)}$.
The global model is then updated following the convex combination of FL and FD update directions as
\begin{align}
    \boldsymbol{\theta}^{(t+1)} = \boldsymbol{\theta}^{(t)} - \alpha \eta_1 \hat{\mathbf{g}}^{(t)} - (1-\alpha) \eta_2 \nabla Q(D_{\text{pub}}^{(t)};\boldsymbol{\theta}^{(t)}, \hat{\mathbf{z}}^{(t)}),
\label{eq:update rule}
\end{align}
where $Q(D_{\text{pub}}^{(t)};\boldsymbol{\theta}^{(t)}, \hat{\mathbf{z}}^{(t)}) = \text{KL}( \hat{\mathbf{z}}^{(t)}/\tau \,\Vert\, f(D_{\text{pub}}^{(t)};\boldsymbol{\theta}^{(t)})/\tau )$
and $\text{KL}(\cdot\Vert\cdot)$ denotes the Kullback–Leibler divergence (KLD). Here, $\eta_1$ and $\eta_2$ are learning rates, $f$ is the NN of BS, and $\tau$ is a temperature parameter controlling distribution smoothness. FL and FD emerge as special cases of HFL under noise-free uplink, i.e., when $\mathbf{e}_{g}^{(t)} = \mathbf{e}_{z}^{(t)} = \mathbf{0}$. Specifically, when $\alpha = 1$ and $K = K_1$, HFL reduces to FL \cite{McMahan:2017}, whereas when $\alpha = 0$ and $K_1 = 0$, it reduces to FD \cite{Liu:2022}.

\subsection{Convergence Analysis}
We first state the assumptions used to establish the convergence of the proposed HFL framework. 
\begin{asum}
    \textbf{(Convexity)}: Let $F(D;\boldsymbol{\theta}) = \sum_k \lambda_k F_k(D_k;\boldsymbol{\theta})$, where $D$ is the total dataset, $0<\lambda_k < 1$ with $\sum_k \lambda_k = 1$. The function $F(D;\boldsymbol{\theta})$ is $\mu_1$-strongly convex for all $\boldsymbol{\theta}_1, \boldsymbol{\theta}_2$, i.e.,
    \begin{align}
        F(D;\boldsymbol{\theta}_1) &\ge F(D;\boldsymbol{\theta}_2) 
        + \langle\boldsymbol{\theta}_1 - \boldsymbol{\theta}_2, \nabla F(D;\boldsymbol{\theta}_2)\rangle \nonumber \\
        &\quad + \frac{\mu_1}{2} \lVert \boldsymbol{\theta}_1 - \boldsymbol{\theta}_2 \rVert^2, 
    \end{align}
    where $\langle\cdot, \cdot\rangle$ denotes inner product.
    Similarly, $Q(D;\boldsymbol{\theta},\mathbf{z})$, which is defined after \eqref{eq:update rule}, is $\mu_2$-strongly convex as
    \begin{align}
        Q(D;\boldsymbol{\theta}_1,\mathbf{z}) &\ge Q(D;\boldsymbol{\theta}_2,\mathbf{z}) 
        + \langle\boldsymbol{\theta}_1 - \boldsymbol{\theta}_2, \nabla Q(D;\boldsymbol{\theta}_2,\mathbf{z})\rangle \nonumber \\
        &\quad + \frac{\mu_2}{2} \lVert \boldsymbol{\theta}_1 - \boldsymbol{\theta}_2 \rVert^2.
    \end{align}
    \label{assumption1}
\end{asum}
\vspace{-2.0em}
\begin{asum}
    \textbf{(Bounded variance)}: Let $D^{(t)}$ denote the dataset sampled at round $t$. The variance of the stochastic gradient with mini-batches is upper bounded by some $\psi_1^2$ and $\psi_2^2$ for $F$ and $Q$, respectively.
    \begin{align}
        \mathbb{E}\!\left[\lVert \nabla F(D^{(t)};\boldsymbol{\theta}) - \nabla F(D;\boldsymbol{\theta}) \rVert^2\right] \le \psi_1^2,\\
        \mathbb{E}\!\left[\lVert \nabla Q(D^{(t)};\boldsymbol{\theta},\mathbf{z}) - \nabla Q(D;\boldsymbol{\theta},\mathbf{z}) \rVert^2\right] \le \psi_2^2,
    \end{align}
    where $D^{(t)} \subset D$.
    \label{assumption2}
\end{asum}

\begin{asum}
    \textbf{(Bounded gradient)}: The gradient norms are upper bounded by $G_1^2$ and $G_2^2$ for $F$ and $Q$, respectively.
    \begin{align}
        \mathbb{E}\!\left[\lVert \nabla F(D;\boldsymbol{\theta}) \rVert^2\right] \le G_1^2, \\
        \mathbb{E}\!\left[\lVert \nabla Q(D;\boldsymbol{\theta},\mathbf{z}) \rVert^2\right] \le G_2^2.
    \end{align}
    \label{assumption3}
\end{asum}
\vspace{-2.0em}
\noindent These assumptions are commonly used in FL convergence analyses \cite{Xing:2021}. We additionally assume the following to handle logit noise.
\begin{asum}
    \textbf{(Lipschitz continuity in logit)}: The gradient of $Q$ is Lipschitz-continuous in $\mathbf{z}$.
    \begin{align}
        \lVert \nabla Q(D;\boldsymbol{\theta}, \mathbf{z}_1) -  \nabla Q(D;\boldsymbol{\theta}, \mathbf{z}_2) \rVert \le L\lVert \mathbf{z}_1 - \mathbf{z}_2 \rVert.
    \end{align}
    \label{assumption 4}
\end{asum}
\vspace{-2.0em }
\noindent Based on these assumptions, we establish the convergence of the proposed HFL framework.

\setcounter{thm}{0}
\begin{prop}
As $t \rightarrow \infty$, assuming small $\mathbf{e}_{z}^{(t)}$ and $\mathbb{E}[\mathbf{e}_{g}^{(t)}] = \mathbb{E}[\mathbf{e}_{z}^{(t)}] = \mathbf{0}$, the sequence $\{\boldsymbol{\theta}^{(t)}\}$ trained via HFL satisfies $\mathbb{E}\!\left[\lVert \boldsymbol{\theta}^{(t)} - \boldsymbol{\theta}^{*} \rVert^2\right] \le A/\bar{\mu}$, where $\bar{\mu}$ and $A$ are constants, with $0 < \bar{\mu} < 1$.
\end{prop}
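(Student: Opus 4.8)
The plan is to treat $V_t := \mathbb{E}[\lVert \boldsymbol{\theta}^{(t)} - \boldsymbol{\theta}^{*}\rVert^2]$ as a Lyapunov quantity and to establish a one-step contraction $V_{t+1} \le (1-\bar{\mu})\,V_t + A$; unrolling this geometric recursion and letting $t \to \infty$ then gives $V_t \le A/\bar{\mu}$, which is the claim. I would take $\boldsymbol{\theta}^{*}$ to be the minimizer of the aggregate objective $\Phi(\boldsymbol{\theta}) := \alpha \eta_1 F(D;\boldsymbol{\theta}) + (1-\alpha)\eta_2 Q(D;\boldsymbol{\theta},\mathbf{z})$, which by Assumption~\ref{assumption1} is $\mu$-strongly convex with $\mu := \alpha\eta_1\mu_1 + (1-\alpha)\eta_2\mu_2$; since the statement is asymptotic, $\mathbf{z}$ here denotes the stabilized distillation target and $\mathbf{z}^{(t)}$ may be treated as having converged to it.

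First I would rewrite the update \eqref{eq:update rule} as $\boldsymbol{\theta}^{(t+1)} = \boldsymbol{\theta}^{(t)} - \mathbf{d}^{(t)}$ with $\mathbf{d}^{(t)} := \alpha\eta_1\hat{\mathbf{g}}^{(t)} + (1-\alpha)\eta_2\,\nabla Q(D_{\text{pub}}^{(t)};\boldsymbol{\theta}^{(t)},\hat{\mathbf{z}}^{(t)})$, and expand
\[
\lVert \boldsymbol{\theta}^{(t+1)} - \boldsymbol{\theta}^{*}\rVert^2
= \lVert \boldsymbol{\theta}^{(t)} - \boldsymbol{\theta}^{*}\rVert^2
 - 2\langle \boldsymbol{\theta}^{(t)} - \boldsymbol{\theta}^{*}, \mathbf{d}^{(t)}\rangle
 + \lVert \mathbf{d}^{(t)}\rVert^2 .
\]
Conditioning on $\boldsymbol{\theta}^{(t)}$, the gradient noise enters linearly, so $\mathbb{E}[\mathbf{e}_g^{(t)}] = \mathbf{0}$ plus the unbiasedness implicit in Assumption~\ref{assumption2} yields $\mathbb{E}[\hat{\mathbf{g}}^{(t)}] = \nabla F(D;\boldsymbol{\theta}^{(t)})$. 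The logit noise is the delicate part: $\mathbf{e}_z^{(t)}$ enters \emph{nonlinearly} through $\nabla Q$, so zero mean alone does not remove its effect, and this is exactly where the ``small $\mathbf{e}_z^{(t)}$'' hypothesis is needed. Linearizing $\nabla Q$ in its logit argument and invoking Assumption~\ref{assumption 4}, I would write $\mathbb{E}[\nabla Q(D_{\text{pub}}^{(t)};\boldsymbol{\theta}^{(t)},\hat{\mathbf{z}}^{(t)})] = \nabla Q(D;\boldsymbol{\theta}^{(t)},\mathbf{z}) + \mathbf{b}^{(t)}$, where the bias satisfies $\lVert \mathbf{b}^{(t)}\rVert = O(\mathbb{E}[\lVert \mathbf{e}_z^{(t)}\rVert^2])$ (or, more crudely, $\lVert \mathbf{b}^{(t)}\rVert \le L\,\mathbb{E}[\lVert \mathbf{e}_z^{(t)}\rVert]$ directly from Assumption~\ref{assumption 4}). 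Hence $\mathbb{E}[\mathbf{d}^{(t)}] = \nabla\Phi(\boldsymbol{\theta}^{(t)}) + (1-\alpha)\eta_2\,\mathbf{b}^{(t)}$.

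Then I would bound the linear term by strong monotonicity of $\nabla\Phi$ (from $\mu$-strong convexity together with $\nabla\Phi(\boldsymbol{\theta}^{*}) = \mathbf{0}$), $\langle \boldsymbol{\theta}^{(t)} - \boldsymbol{\theta}^{*}, \nabla\Phi(\boldsymbol{\theta}^{(t)})\rangle \ge \mu\lVert \boldsymbol{\theta}^{(t)} - \boldsymbol{\theta}^{*}\rVert^2$, and absorb the bias cross-term with Cauchy--Schwarz and Young's inequality, $2(1-\alpha)\eta_2\,|\langle \boldsymbol{\theta}^{(t)} - \boldsymbol{\theta}^{*}, \mathbf{b}^{(t)}\rangle| \le \mu\lVert \boldsymbol{\theta}^{(t)} - \boldsymbol{\theta}^{*}\rVert^2 + \mu^{-1}(1-\alpha)^2\eta_2^2\lVert \mathbf{b}^{(t)}\rVert^2$, so the linear contribution is $-\mu\lVert \boldsymbol{\theta}^{(t)} - \boldsymbol{\theta}^{*}\rVert^2$ plus a constant of order $\mathbb{E}[\lVert \mathbf{e}_z^{(t)}\rVert^2]$. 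For the quadratic term I would use $\lVert \mathbf{d}^{(t)}\rVert^2 \le 2\alpha^2\eta_1^2\lVert \hat{\mathbf{g}}^{(t)}\rVert^2 + 2(1-\alpha)^2\eta_2^2\lVert \nabla Q(D_{\text{pub}}^{(t)};\boldsymbol{\theta}^{(t)},\hat{\mathbf{z}}^{(t)})\rVert^2$ and bound each piece by a constant: $\mathbb{E}[\lVert \hat{\mathbf{g}}^{(t)}\rVert^2] \le G_1^2 + \psi_1^2 + \mathbb{E}[\lVert \mathbf{e}_g^{(t)}\rVert^2]$ by Assumptions~\ref{assumption2} and~\ref{assumption3} plus the finite zero-forcing noise power implied by \eqref{eq:decoding}, and $\mathbb{E}[\lVert \nabla Q(\cdots,\hat{\mathbf{z}}^{(t)})\rVert^2] \le 2(G_2^2 + \psi_2^2) + 2L^2\,\mathbb{E}[\lVert \mathbf{e}_z^{(t)}\rVert^2]$ by Assumptions~\ref{assumption2}, \ref{assumption3}, and~\ref{assumption 4}. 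Collecting terms gives $V_{t+1} \le (1-\bar{\mu})V_t + A$ with $\bar{\mu} := \mu$ and $A$ the sum of the two constants above; for sufficiently small learning rates ($\eta_1\mu_1 < 1$ and $\eta_2\mu_2 < 1$) one has $0 < \bar{\mu} = \alpha\eta_1\mu_1 + (1-\alpha)\eta_2\mu_2 < 1$. Unrolling, $V_t \le (1-\bar{\mu})^t V_0 + A\sum_{s=0}^{t-1}(1-\bar{\mu})^s \le (1-\bar{\mu})^t V_0 + A/\bar{\mu}$, and since $(1-\bar{\mu})^t \to 0$ the transient term vanishes as $t \to \infty$, leaving $\mathbb{E}[\lVert \boldsymbol{\theta}^{(t)} - \boldsymbol{\theta}^{*}\rVert^2] \le A/\bar{\mu}$.

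The step I expect to be the main obstacle is the clean handling of the logit-noise bias $\mathbf{b}^{(t)}$: because the distillation update depends nonlinearly on $\hat{\mathbf{z}}^{(t)}$, one must justify the linearization (using the smallness hypothesis to make the bias second order in $\mathbf{e}_z^{(t)}$) or settle for the coarser Lipschitz estimate of Assumption~\ref{assumption 4}, and then verify that the resulting term is absorbed into $A$ without eroding the contraction factor. A secondary subtlety is the moving distillation target --- $\mathbf{z}^{(t)}$ depends on past global models, so strong convexity of $\Phi$ at a fixed $\boldsymbol{\theta}^{*}$ is only literally valid once $\mathbf{z}^{(t)}$ has settled, which is precisely why the statement is asymptotic; a fully rigorous argument would carry an additional $\mathbb{E}[\lVert \mathbf{z}^{(t)} - \mathbf{z}\rVert^2]$ term through the recursion and show it decays.
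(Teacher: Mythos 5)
Your proposal follows essentially the same route as the paper: a one-step contraction $V_{t+1}\le(1-\bar{\mu})V_t+A$ obtained from strong convexity of $F$ and $Q$ (yielding the same factor $\bar{\mu}=\alpha\eta_1\mu_1+(1-\alpha)\eta_2\mu_2$), with the variance, gradient-norm, and channel-noise terms collected into $A$, followed by unrolling the geometric recursion. The only notable differences are bookkeeping choices --- you define $\boldsymbol{\theta}^{*}$ as the minimizer of the combined objective rather than a simultaneous minimizer of $F$ and $Q$, and you explicitly track the second-order bias of the nonlinear logit perturbation via Young's inequality where the paper simply declares the first-order Taylor term zero-mean --- which if anything makes your treatment of that step slightly more careful.
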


\begin{proof}
Define $\sigma_g$, $\sigma_z$, $\boldsymbol{\theta}^*$, and $\boldsymbol{\delta}_q^{(t)}$ as follows. Let $\sigma_g = \sup_{t} \mathbb{E}[\lVert \mathbf{e}_g^{(t)} \rVert^2]$ and $\sigma_z = \sup_{t} \mathbb{E}[\lVert \mathbf{e}_z^{(t)} \rVert^2]$. The optimal parameter $\boldsymbol{\theta}^*$ is assumed to satisfy $F(D^{(t)};\boldsymbol{\theta}^*) \le F(D^{(t)};\boldsymbol{\theta}^{(t)})$ and $Q(D^{(t)};\boldsymbol{\theta}^*,\mathbf{z}^{(t)}) \le Q(D^{(t)};\boldsymbol{\theta}^{(t)},\mathbf{z}^{(t)})$ for all $\boldsymbol{\theta}^{(t)}$, $D^{(t)}$, and $\mathbf{z}^{(t)}$. Finally, $\boldsymbol{\delta}_{Q}^{(t)}$ is defined as 
\begin{align}
    \boldsymbol{\delta}_{Q}^{(t)} = \nabla Q(D_{\text{pub}}^{(t)};\boldsymbol{\theta}^{(t)}, \hat{\mathbf{z}}^{(t)}) - \nabla Q(D_{\text{pub}}^{(t)};\boldsymbol{\theta}^{(t)}, \mathbf{z}^{(t)}).    
\end{align}
With small $\mathbf{e}_z^{(t)}$, a first-order Taylor expansion gives $\boldsymbol{\delta}_{Q}^{(t)} \approx \mathbf{e}_{z}^{(t)}\frac{\partial}{\partial \mathbf{z}}\nabla Q(D_{\text{pub}}^{(t)};\boldsymbol{\theta}^{(t)}, \mathbf{z}) \rvert_{\mathbf{z} = \mathbf{z}^{(t)}}$, implying $\mathbb{E}[\boldsymbol{\delta}_{Q}^{(t)}] = \mathbf{0}$. By Assumption~\ref{assumption 4}, $\lVert \boldsymbol{\delta}_Q^{(t)} \rVert$ is upper bounded by $L\lVert \mathbf{e}_z^{(t)} \rVert$.

The update rule \eqref{eq:update rule} is then rewritten as 
\begin{align}
    \boldsymbol{\theta}^{(t+1)} = &\boldsymbol{\theta}^{(t)} - \alpha \eta_1 (\nabla F(D^{(t)};\boldsymbol{\theta}^{(t)}) + \mathbf{e}_g^{(t)}) \nonumber\\&- (1-\alpha) \eta_2 (\nabla Q(D_{\text{pub}}^{(t)};\boldsymbol{\theta}^{(t)}, \mathbf{z}^{(t)}) + \boldsymbol{\delta}_Q^{(t)}).
\end{align}
To derive the upper bound of $\mathbb{E}[\lVert \boldsymbol{\theta}^{(t+1)} - \boldsymbol{\theta}^{*}\rVert^2]$, the derivation proceeds as follows. First, by Assumption~\ref{assumption1} and the definition of $\boldsymbol{\theta}^*$, 
\begin{align}
    &-2\alpha\eta_1 \mathbb{E}[\langle\boldsymbol{\theta}^{(t)} - \boldsymbol{\theta}^*, \nabla F(D^{(t)};\boldsymbol{\theta}^{(t)})\rangle] \nonumber\\&- 2(1-\alpha)\eta_2 \mathbb{E}[\langle\boldsymbol{\theta}^{(t)} - \boldsymbol{\theta}^*, \nabla Q(D^{(t)};\boldsymbol{\theta}^{(t)},\mathbf{z}^{(t)})\rangle] \nonumber\\ &\le  -\alpha \eta_1 \mu_1 \mathbb{E}[\lVert \boldsymbol{\theta}^{(t)} - \boldsymbol{\theta}^* \rVert^2] -(1-\alpha) \eta_2 \mu_2 \mathbb{E}[\lVert \boldsymbol{\theta}^{(t)} - \boldsymbol{\theta}^* \rVert^2].
\end{align}
Second, by Assumptions~\ref{assumption2} and \ref{assumption3}, 
\begin{align}
    &\mathbb{E}[\lVert \nabla F(D^{(t)};\boldsymbol{\theta}^{(t)}) - \nabla F(D;\boldsymbol{\theta}^{(t)}) +\nabla F(D;\boldsymbol{\theta}^{(t)})\rVert^2 \nonumber\\
    &\le \psi_1^2 + G_1^2.
\end{align}
The assumptions also yield $\mathbb{E}[\lVert \nabla Q(D^{(t)};\boldsymbol{\theta}^{(t)},\mathbf{z}^{(t)}) \rVert^2] \le \psi_2^2 + G_2^2$. Finally, by Cauchy-Schwarz inequality, the average inner product of the two gradients is bounded by $\sqrt{(\psi_1^2 + G_1^2)(\psi_2^2 + G_2^2)}$. Combining these, we obtain 
\begin{align}
    &\mathbb{E}[\lVert \boldsymbol{\theta}^{(t+1)} - \boldsymbol{\theta}^* \rVert^2] \nonumber\\&\le(1-\alpha\eta_1\mu_1 - (1-\alpha)\eta_2\mu_2)\mathbb{E}[\lVert \boldsymbol{\theta}^{(t)} - \boldsymbol{\theta}^* \rVert^2] \nonumber\\
    &+\alpha^2\eta_1^2 (\psi_1^2 + G_1^2 + \sigma_g) + (1-\alpha)^2 \eta_2^2 (\psi_2^2 + G_2^2 + L^2\sigma_z) \nonumber \\
    &+ 2\alpha(1-\alpha)\eta_1\eta_2\sqrt{(\psi_1^2 + G_1^2)(\psi_2^2 + G_2^2)} \nonumber\\
    & = (1-\bar{\mu})\mathbb{E}[\lVert \boldsymbol{\theta}^{(t)} - \boldsymbol{\theta}^* \rVert^2] + A.
    \label{ieq:upper bound}
\end{align} 
When $0 < \bar{\mu} < 1$ by adjusting $\eta_1, \eta_2$, and $\alpha$, an upper bound $A/\bar{\mu}$ is derived as $t \rightarrow \infty$.
\end{proof}
\noindent Again, the upper bound recovers FL and FD as special cases, which are derived in \cite{Mu:2024}. When $\sigma_g = \sigma_z = 0$ and $\alpha = 1$, we obtain the FL bound. Otherwise, we obtain the FD bound when $\sigma_g = \sigma_z = 0$ and $\alpha = 0$. Although the convergence analysis for the proposed HFL framework is derived under a small logit error assumption and therefore does not strictly apply to low-SNR regimes, the empirical results in Sec.~\ref{Sec:Simulation results} show that, by appropriately exploiting the DoFs of HFL as introduced in the next subsection, the algorithm still converges at low SNR, exhibiting behavior consistent with the small logit error regime.

\subsection{Degrees of Freedom: Adaptive UE Clustering and Weight Selection}
In each communication round of HFL, the BS determines two configurations: the assignment of UEs to FL group or FD group, i.e., gradient or logit transmissions, and the weight used to fuse the resulting FL and FD global updates. These decisions exploit instantaneous channel state information (CSI) and loss feedback on public data, assuming that the decisions implemented via lightweight computations can be completed within the channel coherence time.

\subsubsection{Adaptive UE clustering}
For each UE $k$, the BS computes the post-equalization noise enhancement factor $q_k = 1/(\rho[ \mathbf{H}^{(t)\mathrm{H}}\mathbf{H}^{(t)}]_{kk})$. A smaller $q_k$ indicates more reliable gradient transmission, whereas a larger $q_k$ suggests that transmitting logits is preferable due to their noise robustness. To partition UEs into FL and FD groups, we apply the Jenks natural breaks optimization \cite{Jenks:1967} with $S=2$ classes to the set $\{q_k\}_{k=1}^{K}$. Jenks optimization minimizes intra-class variance and identifies a threshold $q^\star$ separating low and high noise-enhancement factors. This approach is equivalent to one-dimensional $k$-means clustering \cite{Heim:2022}. The BS then assigns UE $k$ to the FL group $\mathcal{I}_{k}^{(t)} = 0$ if $q_k \le q^\star$, and to the FD group $\mathcal{I}_{k}^{(t)} = 1$ if $q_k > q^\star$.

\subsubsection{Adaptive weight selection}
After computing the FL and FD updates, their respective global gradient descent directions are defined as $\mathbf{d}_{\mathrm{fl}} = -\eta_1 \hat{\mathbf{g}}^{(t)}$ and $\mathbf{d}_{\mathrm{fd}} = - \eta_2 \nabla Q(D_{\text{pub}}^{(t)};\boldsymbol{\theta}^{(t)}, \hat{\mathbf{z}}^{(t)})$. To combine these two directions, we perform an optimization over convex weights $(\sigma(s^{(w)}), 1-\sigma(s^{(w)}))$, where $\sigma(\cdot)$ is the sigmoid function and $s^{(w)} \in \mathbb{R}$ is an unconstrained scalar updated at epoch $w$. The objective is to minimize the cross-entropy loss on public data
\begin{align}
\mathcal{L}(s^{(w)}) &=  
F(D_{\text{pub}}^{(t)}; \boldsymbol{\theta}^{(t)} + \sigma(s^{(w)}) \mathbf{d}_{\text{fl}} + (1-\sigma(s^{(w)})) \mathbf{d}_{\text{fd}}).
\label{eq: damped newton method loss function}
\end{align}
We solve this minimization using a damped Newton method due to its rapid convergence in a few iterations \cite{Boyd:2004} as
\begin{align}
s^{(w+1)} = s^{(w)} - \eta_3 \,
\frac{\partial \mathcal{L}(s^{(w)}) / \partial s^{(w)}}
{\partial^2 \mathcal{L}(s^{(w)}) / \partial (s^{(w)})^2},
\label{eq:damped newton method}
\end{align}
where the first- and second-order derivatives are approximated via central finite differences, and $\eta_3$ is the damping factor. The final value $s^\star$ determines the weight $\alpha = \sigma(s^*)$ and is recomputed each round. 

The introduced lightweight methods enable HFL to adaptively reflect instantaneous channel conditions every round. A summary of the overall HFL procedure is provided in Algorithm~1.

\begin{algorithm}[t]
\caption{Hybrid Federated Learning}
\begin{algorithmic}[1]
\For{$t \in \{ 1, ..., T\}$} 
\State \textbf{[BS]}
\State Decide $\mathcal{I}_{k}^{(t)}$ for $\forall k$ via Jenks optimization method with acquired CSI $\mathbf{H}_{t}$.
\State Broadcast $\boldsymbol{\theta}^{(t)}$, $\mathcal{I}_{k}^{(t)}$ to all UEs.
\State \textbf{[UE]}
\State Train UE to obtain gradient $\mathbf{g}_k^{(t)}$.
\State For FL group UEs, convert $\mathbf{g}_{k_1}^{(t)}$ as $\mathbf{x}_{k_1}^{(t)}$. For FD group UEs, using trained $\boldsymbol{\theta}_{k_2}^{(t)}$, generate logit $\mathbf{z}_{k_2}^{(t)}$. Then, convert $\mathbf{z}_{k_2}^{(t)}$ as $\mathbf{x}_{k_2}^{(t)}$.
\For{$m \in \{ 1, ..., L\}$} 
\State \textbf{[UE]}
\State Transmit $\mathbf{x}^{(t)}[m]$ through uplink channel.
\State \textbf{[BS]}
\State Conduct ZF to decode each signal as \eqref{eq:decoding}.
\EndFor
\State \textbf{[BS]}
\State Reconstruct signal to gradients and logits as $\hat{\mathbf{g}}_{k_1}^{(t)}$ and logits $\hat{\mathbf{z}}_{k_2}^{(t)}$.
\State Aggregate parameters and logits to $\hat{\mathbf{g}}^{(t)}$ and $\hat{\mathbf{z}}^{(t)}$ using \eqref{eq: weight aggregation} and \eqref{eq: logit aggregation}.
\State Optimize FL/FD weights using damped Newton method as \eqref{eq:damped newton method} by setting the loss function as \eqref{eq: damped newton method loss function}.
\EndFor
\label{Alg:Channel-aware HFL}
\end{algorithmic}
\end{algorithm}

\begin{figure}[t]
\centering
    \begin{subfigure}{\columnwidth}
    \centering
    \includegraphics[width=\columnwidth]{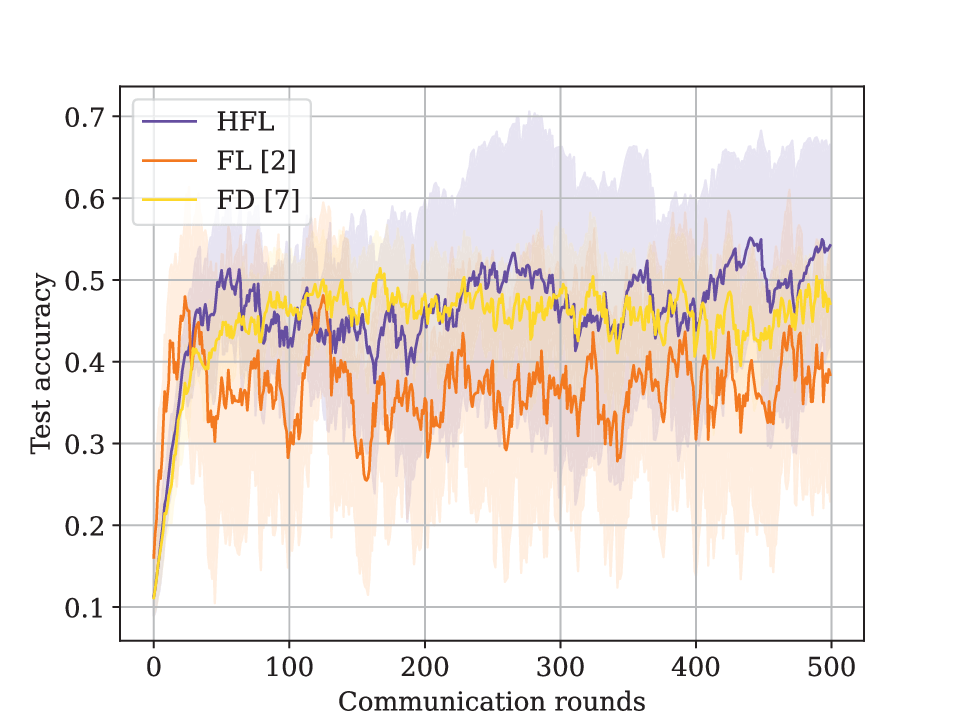}
    \caption{\centering Comparison of HFL, FL, and FD at $\rho= -20\text{dB}$.}
    \label{sim_fig:HFL: -20db}
    \end{subfigure}
    \begin{subfigure}{\columnwidth}
    \centering
     \includegraphics[width=\columnwidth]{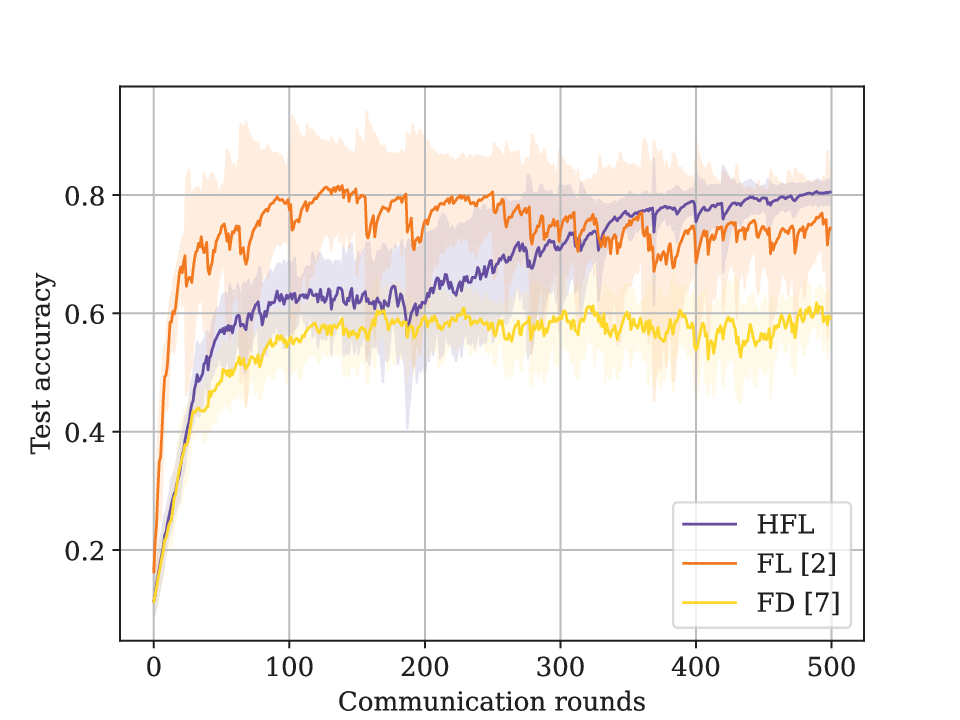}
    \caption{\centering Comparison of HFL, FL, and FD under at $\rho = -15\text{dB}$.}
    \label{sim_fig:HFL: -15db}
    \end{subfigure}
\caption{\centering Comparison of HFL, FL, and FD at low SNR.}
\label{sim_fig:HFL vs FL vs FD}
\end{figure}

\begin{figure}[t]
\centering
    \begin{subfigure}{\columnwidth}
    \centering
    \includegraphics[width=\columnwidth]{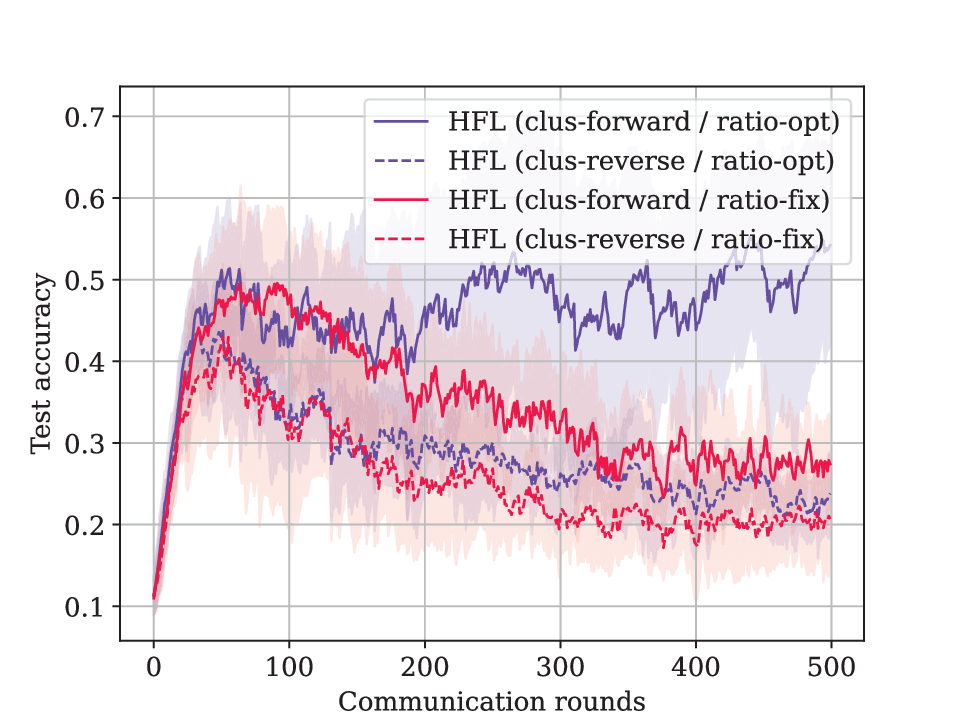}
    \caption{\centering Effect of DoF at $\rho = -20\text{dB}$.}
    \label{sim_fig:HFL_DoF_-20db}
    \end{subfigure}
    \vspace{0.1em}
    \begin{subfigure}{\columnwidth}
    \centering
     \includegraphics[width=\columnwidth]{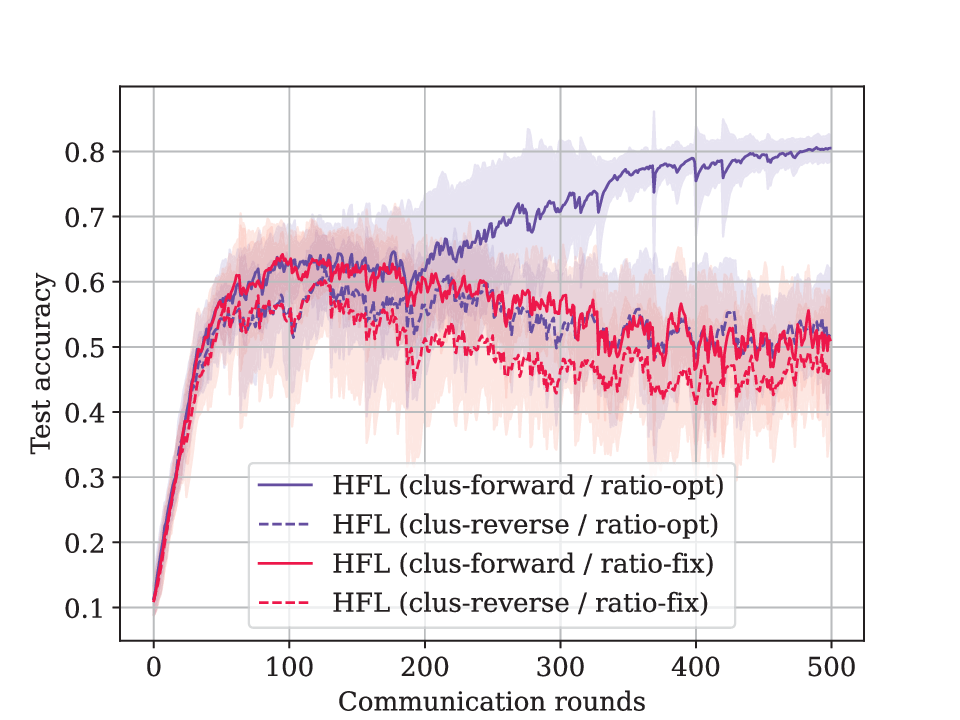}
    \caption{\centering Effect of DoF at $\rho = -15\text{dB}$.}
    \label{sim_fig:HFL_DoF_-15db}
    \end{subfigure}
\caption{\centering Effect of DoF at low SNR: Clustering and ratio selection.}
\vspace{-1.0em} 
\label{sim_fig:HFL_DoF}
\end{figure}

\section{Simulation results}
\label{Sec:Simulation results}
In this section, we show the performance of the proposed method by comparing it with FL and FD at low SNR. We then validate the offered DoF. In our system, we consider $N=30$ antennas at the BS and $K = 30$ UEs fully participating in distributed learning without scheduling to solve the MNIST classification problem with $C=10$ classes. There is no zero-padding at the transmission stage by setting $L = P/2 = CP_{\text{pub}}/2 = 39755$. To emphasize the noise-robust HFL training, we set low SNR $\rho$ as $-20$dB and $-15$dB, then evaluate the test accuracy every communication round. For model training, learning rates and damping factor are set as $\eta_1 = \eta_2 = 0.01$, and $\eta_3 = 0.1$. Local training and KD epochs are $1$, damped Newton method epochs are $30$, and the temperature is $\tau = 2$.

\subsection{Compare FL, FD, and HFL}
In Fig.~\ref{sim_fig:HFL vs FL vs FD}, we compare the test accuracy of FL, FD, and HFL at low SNRs, which are $\rho = -20\text{dB}$ and $\rho = -15\text{dB}$. FL uses FedAvg \cite{McMahan:2017}, and FD follows \cite{Liu:2022}. At the harsher $\rho=-20 \text{dB}$, FL achieves lower accuracy than FD, consistent with its sensitivity to channel noise that corrupts high-dimensional gradients prior to aggregation. FD is less affected because only low-dimensional logits are transmitted. HFL attains the highest accuracy and converges faster than FD. When the SNR improves to $\rho=-15 \text{dB}$, FL recovers and surpasses FD but remains below HFL after convergence, while FD continues to converge more slowly and saturates at a lower accuracy. HFL shows the highest accuracy but converges more slowly than FL. Across both SNRs, HFL achieves the highest test accuracy by combining gradient- and logit-based updates, inheriting the FD’s noise-robustness while leveraging the FL's learning performance.

\subsection{Effect of DoF}
In Fig.~\ref{sim_fig:HFL_DoF}, we evaluate HFL under four configurations to validate the proposed DoF selection scheme: UE clustering (`clus-forward', `clus-reverse') and weight selection (`weight-opt', `weight-fix'). For clustering, UEs are partitioned adaptively using the noise-enhancement factor as the criterion. `clus-forward' follows our proposed rule, whereas `clus-reverse' applies the opposite mapping, where the BS sets $\mathcal{I}_{k}^{(t)}=0$ if $q_k>q^\star$ and $\mathcal{I}_{k}^{(t)}=1$ if $q_k\le q^\star$. For weight selection, `weight-opt' adjusts the weight each round, while `weight-fix' holds it at $\sigma(s^*)=0.5$. As expected, using both `clus-forward' and `weight-opt' achieves the highest test accuracy, demonstrating its robustness to noise. In addition, the superior accuracy of `clus-forward' over `clus-reverse' further supports our heuristic assignment of high-noise UEs to FL and low-noise UEs to FD.

\section{Conclusion}
\label{Sec:Conclusion}
In this paper, we proposed a novel HFL framework that adapts its training strategy to the wireless channel conditions, mitigating the complementary weaknesses of FL and FD. We first described the overall framework and derived its convergence. Then, we exploited the HFL's DoF through adaptive UE clustering and weight selection, which determine per UE whether gradients or logits are transmitted, as well as the weighting of FL and FD contributions, handled by Jenks optimization and a damped Newton method, respectively. Despite its simplicity, the HFL differs from prior works by enabling per-UE, channel-aware selection of the transmitted information. Numerical results showed consistent gains of HFL at low SNR when fully exploiting its DoF. Future work includes designing beamformers beyond ZF for over-the-air HFL and introducing additional DoFs informed by both channel conditions and local data distributions.

\bibliographystyle{IEEEtran}
\bibliography{references}

\end{document}